%% file: main.tex
\documentclass[11pt]{cambrian}
\usepackage[authoryear,round]{natbib}
\usepackage{float,flafter,tikz}
\usetikzlibrary{arrows.meta}
\definecolor{accent}{HTML}{1A5276}
\definecolor{warn}{HTML}{7B241C}
\definecolor{scholarblue}{rgb}{0.21,0.49,0.74}
\hypersetup{colorlinks=true,citecolor=scholarblue,linkcolor=scholarblue,urlcolor=scholarblue}

\newtheorem{theorem}{Theorem}

\theoremstyle{definition}
\newtheorem{definition}[theorem]{Definition}

\newcommand{\Ex}{\mathbb{E}}

\title{Least Squares for Time Series Forecasting}

\author{Weiu-qiou Ciang, Yuzhou Hong, and Sherry Chen}

\begin{abstract}
A time-series forecast is scored on a future value of the series.
A representation loss that regresses the next latent, as in LeNEPA, is a different least-squares problem on the same bottleneck.
We write both programs down.
The forecast program minimizes the error of a decoded latent on the coordinate that will be reported.
For a scalar target and a linear decoder, every latent rank of at least one matches ordinary least squares, and an isotropy constraint is only a rescaling: after the decoder is refit, the forecast does not move.
The other program fits the whole next vector at a fixed rank, then freezes the encoder and attaches a head.
On a four-dimensional series whose last three coordinates are the same autoregression, that rank-1 fit puts mass $0.9998$ on the repeated coordinate and forecasts the remaining signal at the marginal variance $2.794$.
The forecast program puts mass $1$ on the signal and matches the innovation variance $0.992$.
Rank $2$ gives the vector fit a second direction, and the two programs agree.
Iterating the fitted one-step coefficient $0.803$ raises the open-loop error from $0.992$ at one step to $2.700$ at eight steps.
\end{abstract}

\begin{document}
\maketitle

\section{Introduction}

Predicting the next latent of a time series is a representation objective.
\citet{chemeris2026lenepa} train a causal transformer so that a map sends the current latent token to the next one, and they replace the stop-gradient of \citet{xu2025nextembeddingpredictionmakesstrong} with the isotropy penalty of \citet{balestriero20258544}.
The number that training sees is an error in the latent.
A forecasting deployment is scored on a future value of the series.
Those two errors are different functions of the encoder.

This paper writes the design choice as a pair of least-squares programs.
Let $y_t\in\mathbb{R}^d$ be the observation and let $s_{t+1}\in\mathbb{R}$ be the coordinate that the forecast must hit.
A latent $z_t=Wy_t$ with $W\in\mathbb{R}^{k\times d}$ is a linear bottleneck.
The forecast program chooses $W$ and a head $b\in\mathbb{R}^k$ to minimize the error in $s$.
The next-latent program chooses $W$ and a map into $\mathbb{R}^d$ to minimize the error in the whole next vector $y_{t+1}$, then fits $b$ with $W$ held fixed.
Isotropy, $\Ex[zz^\top]=I$, is a constraint on the marginal of $z$.
It does not mention $s_{t+1}$.

Figure~\ref{fig:programs} is the whole design.
The top row is what a forecast report will average.
The bottom row is what a LeNEPA-style step averages, before any head is attached.
Section~\ref{sec:opt} states the two programs and three facts about their solutions.
The solutions are normal equations, not a search over architectures.
Section~\ref{sec:exp} evaluates them on one synthetic series whose redundancy is known, so a wrong direction is visible in the weights.
The recommendation is narrow: put the training loss on the reported coordinate, treat isotropy as a gauge that is absorbed by refitting the head, and publish the open-loop error of the fitted one-step map.

\begin{figure}[t]
\centering
\begin{tikzpicture}[
  font=\scriptsize,
  box/.style={draw=accent, rounded corners=1.5pt, fill=white, line width=0.5pt,
              minimum height=4.2ex, inner xsep=4pt, align=center},
  arr/.style={-{Stealth[length=1.4mm]}, line width=0.5pt, draw=black!70}
]
  \fill[blue!6, rounded corners=2pt] (-0.15,0.35) rectangle (12.6,1.85);
  \fill[red!6, rounded corners=2pt] (-0.15,-1.55) rectangle (12.6,-0.05);
  \node[anchor=west, font=\scriptsize\bfseries, text=accent] at (0.05,1.58) {Forecast program};
  \node[box] (y) at (0.7,0.95) {$y_t$};
  \node[box] (z) at (3.1,0.95) {$z_t=Wy_t$};
  \node[box] (b) at (6.0,0.95) {$b^\top z_t$};
  \node[box] (s) at (8.8,0.95) {$s_{t+1}$};
  \node[anchor=west, font=\tiny] at (10.2,0.95) {loss on $s$};
  \draw[arr] (y)--(z); \draw[arr] (z)--(b); \draw[arr] (b)--(s);
  \node[anchor=west, font=\scriptsize\bfseries, text=warn] at (0.05,-0.28) {Next-latent program};
  \node[box, draw=warn] (y2) at (0.7,-0.85) {$y_t$};
  \node[box, draw=warn] (z2) at (3.1,-0.85) {$z_t=Wy_t$};
  \node[box, draw=warn] (c) at (6.0,-0.85) {$C z_t$};
  \node[box, draw=warn] (yn) at (8.8,-0.85) {$y_{t+1}$};
  \node[anchor=west, font=\tiny] at (10.2,-0.85) {loss on $y$};
  \draw[arr] (y2)--(z2); \draw[arr] (z2)--(c); \draw[arr] (c)--(yn);
\end{tikzpicture}
\caption{Two least-squares programs on the same bottleneck $W$.
The forecast program is scored on $s_{t+1}$.
The next-latent program is scored on every coordinate of $y_{t+1}$, and a head for $s$ is fit only after $W$ is frozen.}
\label{fig:programs}
\end{figure}

\section{Two programs}
\label{sec:opt}

\subsection{Setup}

Observations $y_t\in\mathbb{R}^d$ are square-integrable.
The reported target $s_{t+1}$ is a fixed linear coordinate of the future.
In the experiment it is the first coordinate of $y_{t+1}$, and the other coordinates are a repeated nuisance series.
A rank-$k$ linear latent is $z_t=Wy_t$ with $W\in\mathbb{R}^{k\times d}$.
Write $\langle u,v\rangle$ for the empirical inner product on the sample, and center every vector that enters a covariance.
The population statements use $\Ex$; the experiment replaces $\Ex$ by the average over $T=20000$ times.

\begin{definition}[Forecast program]
\label{def:f}
\[
\begin{aligned}
\min_{W,b}\quad & \Ex\bigl(s_{t+1}-b^\top W y_t\bigr)^2 \\
\mathrm{s.t.}\quad & \Ex[z_t z_t^\top]=I_k.
\end{aligned}
\]
The unconstrained problem drops the constraint.
The product $\beta^\top=b^\top W\in\mathbb{R}^{1\times d}$ is the forecast coefficient.
\end{definition}

\begin{definition}[Next-latent program]
\label{def:n}
\[
\min_{W,C}\quad \Ex\|y_{t+1}-C W y_t\|^2,
\qquad C\in\mathbb{R}^{d\times k},
\]
which forces $\mathrm{rank}(CW)\le k$.
After a minimizer $W_\star$ is obtained, the head is the scalar regression
\[
\min_b\ \Ex\bigl(s_{t+1}-b^\top W_\star y_t\bigr)^2.
\]
\end{definition}

Definition~\ref{def:n} is the linear version of a next-embedding step followed by a probe.
\citet{chemeris2026lenepa} use a nonlinear encoder and an isotropy penalty in place of the hard constraint in Definition~\ref{def:f}.
The linear case is the one in which the two programs have normal equations and the experiment can check them.

\subsection{The forecast program is ordinary least squares}

\begin{theorem}[Scalar targets ignore extra rank]
\label{thm:ols}
The unconstrained forecast program is ordinary least squares in $\beta\in\mathbb{R}^d$.
Its solution satisfies
\[
\Ex[y_t y_t^\top]\,\beta=\Ex[y_t s_{t+1}],
\]
whenever the Gram matrix is invertible.
For every $k\ge 1$ there is a factorization $\beta=W^\top b$ attaining the same value.
Thus the minimal forecast risk does not decrease as $k$ grows past $1$.
\end{theorem}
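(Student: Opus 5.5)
The argument reduces the unconstrained forecast program to a problem in the single vector $\beta$. For fixed $W,b$ the objective depends on $(W,b)$ only through $\beta^\top=b^\top W$, since
\[
\Ex\bigl(s_{t+1}-b^\top W y_t\bigr)^2=\Ex\bigl(s_{t+1}-\beta^\top y_t\bigr)^2 .
\]
So the infimum over $(W,b)$ equals the infimum of $R(\beta)=\Ex(s_{t+1}-\beta^\top y_t)^2$ over the image set $\{W^\top b: W\in\mathbb{R}^{k\times d}, b\in\mathbb{R}^k\}$. The first step is to show this image set is all of $\mathbb{R}^d$ whenever $k\ge1$. Given any $\beta\in\mathbb{R}^d$, take $W$ whose first row is $\beta^\top$ and whose other rows are zero, and take $b=e_1$. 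Then $W^\top b=\beta$. This construction settles the claim that every $k\ge1$ admits a factorization attaining the same value. It also shows the feasible set of $\beta$ does not grow with $k$, so the minimal risk cannot decrease past $k=1$; it is in fact constant in $k$.

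The second step solves the resulting problem. $R$ is a convex quadratic in $\beta$ and is finite by square-integrability. Expanding gives
\[
R(\beta)=\Ex s_{t+1}^2-2\beta^\top\Ex[y_ts_{t+1}]+\beta^\top\Ex[y_ty_t^\top]\beta .
\]
Setting the gradient to zero yields the normal equations $\Ex[y_ty_t^\top]\beta=\Ex[y_ts_{t+1}]$. When the Gram matrix is invertible, it is positive definite, so $R$ is strictly convex. The unique stationary point $\beta=\Ex[y_ty_t^\top]^{-1}\Ex[y_ts_{t+1}]$ is therefore the global minimizer, which is ordinary least squares. Pulling back through the factorization from the first step gives a minimizing pair $(W,b)$ for each $k$.

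No step is a real obstacle. The only care point is the phrase ``does not decrease.'' It should be read as: the minimum over rank-$k$ factorizations equals the unrestricted OLS minimum for all $k\ge1$, so extra rank buys nothing. The pieces needed are surjectivity of $(W,b)\mapsto W^\top b$ for $k\ge1$, together with the trivial inclusion that a rank-$k$ factorization is also a vector in $\mathbb{R}^d$. I would also note that centering, as stipulated in the setup, makes $\Ex[y_ty_t^\top]$ the covariance matrix. No intercept term is then needed.
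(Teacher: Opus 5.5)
Your proposal is correct and follows the paper's proof. Both expand the square to get the normal equation, use the Gram matrix as the Hessian to obtain uniqueness, and give an explicit factorization showing that every $\beta\in\mathbb{R}^d$ is reachable for each $k\ge 1$. The only difference is cosmetic: the paper takes the unit-norm row $\beta^\top/\|\beta\|$ with $b=\|\beta\|e_1$ and so has to treat $\beta=0$ separately, while your choice of first row $\beta^\top$ with $b=e_1$ needs no case split.
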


\begin{proof}
Expanding the square and differentiating in $\beta$ gives the normal equation.
The Hessian is the Gram matrix, so the critical point is the unique minimizer when the Gram matrix is positive definite.
If $k\ge 1$ and $\beta\neq 0$, the choice $W$ with first row $\beta^\top/\|\beta\|$ and $b=\|\beta\|\,e_1$ recovers $\beta$.
If $\beta=0$, take $b=0$.
Extra rows of $W$ can be anything the head sets to zero.
\end{proof}

The practical reading is a design rule.
A linear forecast of one number does not need a wide latent.
Width starts to matter when the decoder cannot represent the optimal $\beta$, or when the target itself is a vector whose predictable subspace has rank greater than one.
That second case is exactly the next-latent program.

\subsection{Isotropy is a gauge}

\begin{theorem}[Whitening does not move $\beta$]
\label{thm:gauge}
Suppose $(W,b)$ is feasible for the unconstrained forecast program and $Q\in\mathbb{R}^{k\times k}$ is invertible.
Then $W'=QW$ and $b'=Q^{-\top}b$ define the same coefficient $b'^\top W'=b^\top W$ and the same forecast risk.
If $k=1$ and $\Ex[z_t^2]=\sigma_z^2>0$, the choice $Q=\sigma_z^{-1}$ makes the new latent have unit second moment, and the head is multiplied by $\sigma_z$.
The constrained and unconstrained minimal risks therefore coincide.
\end{theorem}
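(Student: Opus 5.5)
The plan is to prove the three assertions in order: an algebraic invariance, a scalar specialization, and then the equality of the two minimal risks. Only the last needs care.

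\emph{Invariance.} For invertible $Q$, compute $b'^\top W'=(Q^{-\top}b)^\top QW=b^\top Q^{-1}QW=b^\top W$. The risk in Definition~\ref{def:f} depends on $(W,b)$ only through $\beta^\top=b^\top W$, so it is unchanged.

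\emph{Scalar case.} For $k=1$, the new latent is $z'_t=\sigma_z^{-1}z_t$, so $\Ex[z_t'^2]=\sigma_z^{-2}\sigma_z^2=1$. The head becomes $b'=Q^{-\top}b=\sigma_z b$, which is the stated rescaling.

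\emph{Equality of minimal risks.} The constraint only shrinks the feasible set, so the constrained minimum is at least the unconstrained one. For the reverse inequality I will exhibit a feasible pair that attains the ordinary least squares risk of Theorem~\ref{thm:ols}. Let $\Sigma=\Ex[y_ty_t^\top]$, assumed positive definite, as Theorem~\ref{thm:ols} does, and let $\beta$ be the OLS coefficient.

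If $\beta\neq0$, I will not whiten an arbitrary factorization. Instead I build $W$ whose rows $w_1,\dots,w_k$ are orthonormal in the inner product $\langle u,v\rangle_\Sigma=u^\top\Sigma v$, starting from $w_1=\beta/\sqrt{\beta^\top\Sigma\beta}$ and completing by Gram--Schmidt. Then $\Ex[z_tz_t^\top]=W\Sigma W^\top=I_k$, and the head $b=\sqrt{\beta^\top\Sigma\beta}\,e_1$ recovers $\beta$. This is the whitened version of the factorization in Theorem~\ref{thm:ols}. Equivalently, one can take any factorization with $W\Sigma W^\top$ invertible and apply the gauge $Q=(W\Sigma W^\top)^{-1/2}$ from the first step, which preserves $\beta$.

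If $\beta=0$, any $\Sigma$-orthonormal $W$ together with $b=0$ is feasible and attains the same value.

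\emph{Main obstacle.} The hard step is feasibility, not invariance. The gauge argument alone fails when $W\Sigma W^\top$ is singular: the $k=1$ clause needs $\sigma_z>0$ for exactly this reason, and a factorization from Theorem~\ref{thm:ols} with arbitrary extra rows may be rank deficient. Moreover, $W\Sigma W^\top=I_k$ is impossible when $k>d$ or when $\Sigma$ is singular.

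I will therefore state the equality under the standing assumptions $1\le k\le d$ and $\Sigma\succ0$. Under these, the Gram--Schmidt completion above always exists, which is why I prefer it to whitening a given $(W,b)$. I will remark that the empirical version, with $\Ex$ replaced by the sample average over centered data, goes through verbatim whenever the sample Gram matrix is invertible.
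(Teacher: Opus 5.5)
Your argument is correct under the assumptions you impose. Its first three ingredients are exactly the paper's proof: the product identity, the $k=1$ rescaling, and the inclusion of the constrained feasible set in the unconstrained one.

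The difference is in the reverse inequality. The paper gets it from the $k=1$ clause alone, by dividing the rank-one latent of Theorem~\ref{thm:ols} by $\sigma_z$. It treats $k>1$ only in a remark about a matrix square root of the latent covariance ``on its range.'' You instead build a $\Sigma$-orthonormal $W$ by Gram--Schmidt starting from $\beta$, which gives $W\Sigma W^\top=I_k$ directly for every $1\le k\le d$. That buys a real proof at general $k$. It also catches what the remark skips: if the arbitrary extra rows leave $W\Sigma W^\top$ singular, whitening on the range does not produce $I_k$, so $W$ has to be chosen, not just rescaled.

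The paper's route is shorter and needs only $\sigma_z>0$, not $\Sigma\succ 0$. That matters here, because the paper's own series $y_t=(s_t,u_t,u_t,u_t)$ has a rank-$2$ Gram matrix, so your standing assumption excludes the experiment.

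One of your remarks should be fixed: $W\Sigma W^\top=I_k$ is infeasible exactly when $k>\mathrm{rank}\,\Sigma$, not whenever $\Sigma$ is singular. Your construction extends to singular $\Sigma$ with $k\le\mathrm{rank}\,\Sigma$. Run Gram--Schmidt inside $\mathrm{range}(\Sigma)$, where $\langle\cdot,\cdot\rangle_\Sigma$ is a genuine inner product, after replacing $\beta$ by its projection there. This changes $\beta^\top y_t$ only on a null set, since $v^\top\Sigma v=\Ex(v^\top y_t)^2$. If the projection is $0$, then $\Ex[y_t s_{t+1}]=\Sigma\beta=0$ and $b=0$ is optimal. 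On the experiment this makes $k=1,2$ feasible and $k=3,4$ infeasible. So the equality of minimal risks for $k>1$ really does need the feasibility condition you identified.
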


\begin{proof}
The product identity is $b'^\top W'=b^\top Q^{-1} Q W=b^\top W$.
For $k=1$, $z'_t=z_t/\sigma_z$ has unit second moment, and $b'=b\sigma_z$ restores the product.
Any constrained pair is feasible for the unconstrained problem, so the constrained minimum cannot be smaller.
The rescaling shows it is not larger.
\end{proof}

For $k>1$ the same argument applies with a matrix square root of the latent covariance, on the range of that covariance.
Rotations that preserve $I_k$ are the residual gauge: they change $W$ and $b$ and leave $b^\top W$ fixed.
\citet{balestriero20258544} introduce SIGReg so that a joint-embedding representation does not collapse.
\citet{bardes20224906} use a covariance penalty for the same reason.
In a forecast program whose head is refit after the penalty, those constraints fix coordinates of $z$.
They are not an extra source of forecast accuracy.
Section~\ref{sec:exp} records the finite-sample version: whitening the least-squares latent and refitting $b$ changes the one-step MSE by less than $10^{-15}$.

\subsection{The next-latent program is reduced-rank regression}

\begin{theorem}[A different normal equation]
\label{thm:rrr}
A minimizer of Definition~\ref{def:n} is a reduced-rank regression of $y_{t+1}$ on $y_t$.
If $G=\Ex[y_t y_t^\top]$ is invertible and $C_{ys}=\Ex[y_{t+1} y_t^\top]$, the retained $k$ directions are the leading eigenvectors of $C_{ys} G^{-1} C_{ys}^\top$.
The subsequent scalar head only sees the column space that this eigenproblem kept.
That column space depends on every coordinate of $y_{t+1}$, with weight equal to that coordinate's contribution to the squared error.
\end{theorem}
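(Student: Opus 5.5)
The plan is to split the problem into the classical two-stage argument for reduced-rank regression: first eliminate $C$ for fixed $W$, then optimize over the row space of $W$ through a whitened eigenproblem.

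\emph{Eliminating $C$.} For fixed $W$ with $WGW^\top$ invertible, the objective in Definition~\ref{def:n} is an ordinary multivariate least-squares problem in $C$. Its normal equation is $C\,WGW^\top = C_{ys}W^\top$, so
\[
C(W)=C_{ys}W^\top (WGW^\top)^{-1}.
\]
Substituting back gives
\[
\Ex\|y_{t+1}\|^2-\mathrm{tr}\bigl(C_{ys}W^\top(WGW^\top)^{-1}WC_{ys}^\top\bigr).
\]
If $WGW^\top$ is singular, I will replace $W$ by a reduced $W$ with fewer rows having the same row space modulo the null directions of $G$. This does not change the attainable set of $CW$, so no generality is lost.

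\emph{Whitening and the eigenproblem.} Put $U=WG^{1/2}$, which is legitimate because $G$ is invertible. Then
\[
W^\top(WGW^\top)^{-1}W=G^{-1/2}P_U G^{-1/2},
\]
where $P_U=U^\top(UU^\top)^{-1}U$ is the orthogonal projector onto the row space of $U$, a $k$-dimensional subspace. Writing $M=C_{ys}G^{-1/2}$, the problem becomes maximizing $\mathrm{tr}(P_U M^\top M)$ over rank-$k$ projectors. By Ky Fan's maximum principle, the maximum is attained when the row space of $U$ is spanned by the top $k$ eigenvectors of $M^\top M=G^{-1/2}C_{ys}^\top C_{ys}G^{-1/2}$.

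The step I expect to need the most care is translating this back into the statement's matrix $C_{ys}G^{-1}C_{ys}^\top$, which lives in output space. I will use the fact that $MM^\top=C_{ys}G^{-1}C_{ys}^\top$ has the same nonzero eigenvalues as $M^\top M$, with eigenvectors related by $v\mapsto M^\top v$. Hence the retained directions are the leading eigenvectors $v_1,\dots,v_k$ of $C_{ys}G^{-1}C_{ys}^\top$. The corresponding optimum is $W_\star = V_k^\top C_{ys}G^{-1}$ up to an invertible $Q$, as in Theorem~\ref{thm:gauge}, and $C_\star=V_k$. So the optimal $CW$ is $V_kV_k^\top C_{ys}G^{-1}$, the projection of the OLS coefficient onto the leading $k$-dimensional output subspace. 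When eigenvalues tie at position $k$ the minimizer is not unique, which is why the statement says ``a minimizer.''

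\emph{The head and the weighting claim.} The head statement follows from Theorem~\ref{thm:ols} applied with $W_\star$ fixed: $b$ solves a least-squares problem in the $k$ features $W_\star y_t$, so the coefficient $\beta=W_\star^\top b$ is confined to the row space of $W_\star$, which is fixed by the eigenproblem. For the last sentence, I will expand
\[
C_{ys}G^{-1}C_{ys}^\top=\sum_{i,j} e_i e_j^\top\,\Ex[y_{t+1,i}y_t^\top]G^{-1}\Ex[y_t y_{t+1,j}].
\]
Its diagonal entry $i$ is exactly the reduction in squared error of coordinate $i$ obtained by OLS on $y_t$. The trace objective is therefore a sum of per-coordinate explained squared error, and every coordinate of $y_{t+1}$ enters with that weight. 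This makes precise the sense in which the kept column space depends on all coordinates, not only on $s_{t+1}$.
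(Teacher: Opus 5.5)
Your proof is correct, but it takes a different route from the paper's. It is also more complete.

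\textbf{The paper's route.} The paper solves the unrestricted normal equation $GB^\top=C_{ys}^\top$ first. It then simply asserts that imposing $\mathrm{rank}(B)\le k$ keeps the leading eigenspace of $C_{ys}G^{-1}C_{ys}^\top$. The standard justification for that step is a Pythagorean split of the risk around $B_{\mathrm{ols}}=C_{ys}G^{-1}$. That split reduces the problem to Eckart--Young for $\|(B_{\mathrm{ols}}-B)G^{1/2}\|_F$. This is exactly the appendix recipe $B_k=B_\star V_kV_k^\top$, which truncates the fitted responses.

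\textbf{Your route.} You profile out $C$ for fixed $W$, whiten, and apply Ky Fan to the row space of $WG^{1/2}$. You then carry the eigenvectors of $M^\top M$ to the output side through $v\mapsto M^\top v$.

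\textbf{What each buys.} Your argument works directly with the bilinear parametrization $(C,W)$ that Definition~\ref{def:n} optimizes, and it proves the truncation step instead of asserting it. It also yields the explicit factorization $W_\star=V_k^\top C_{ys}G^{-1}$, $C_\star=V_k$, up to the gauge $Q$. This is precisely what the head statement concerns: $W_\star y_t$ is the OLS prediction of $y_{t+1}$ read off in the kept output directions. The paper's route is shorter and makes it immediate that the rank-$k$ fit is a projection of the unrestricted fit, which matches the finite-sample computation.

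\textbf{Two edge cases.} Both are minor. First, because $G$ is invertible, $WGW^\top$ is singular exactly when $W$ has dependent rows, so your reference to null directions of $G$ is vacuous. Second, suppose fewer than $k$ eigenvalues of $C_{ys}G^{-1}C_{ys}^\top$ are positive. Then some $M^\top v_i$ vanish, your $W_\star$ has zero rows, and $C_\star=V_k$ no longer follows from $C(W)$. In that case the rank constraint is slack and the problem reduces to OLS, so nothing is lost.
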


\begin{proof}
For an unrestricted matrix the normal equation is $GB^\top=C_{ys}^\top$, or $B^\top=G^{-1}C_{ys}^\top$.
Imposing $\mathrm{rank}(B)\le k$ keeps the leading eigenspace of $C_{ys} G^{-1} C_{ys}^\top$.
The finite-sample singular vectors are in the appendix.
The head is ordinary least squares of $s_{t+1}$ on $W_\star y_t$.
If $s_{t+1}$ is orthogonal to that latent, the head returns the marginal predictor.
\end{proof}

\subsection{Open loop of a one-step coefficient}

The programs above are one-step.
A horizon-$K$ forecast that iterates a scalar coefficient $a$ on an AR(1) with innovation variance $\sigma^2$ has risk
\begin{equation}
R_K=\sigma^2\frac{1-a^{2K}}{1-a^2},\qquad a^2\neq 1.
\label{eq:bill}
\end{equation}
This is the same residual sum as in a linear world-model rollout, and it is determined by the one-step fit.
It is not equal to $R_1$ at later horizons.
A forecasting paper that reports only the training loss is reporting $R_1$.

\subsection{Why a repeated coordinate wins}

Take two independent stationary AR(1) signals with the same coefficient $a$ and the same variance $v$.
Pack them as $y_t=(s_t,u_t,u_t,u_t)$.
The conditional mean of the next vector is
\[
\Ex[y_{t+1}\mid y_t]=(a s_t,\ a u_t,\ a u_t,\ a u_t).
\]
A rank-$1$ map can match only one direction of this mean.
In the squared Euclidean metric the three copies of $a u_t$ contribute three identical residuals, and $a s_t$ contributes one.
The leading direction is therefore the $u$-block.
A head that is allowed to see only that direction, and is asked for $s_{t+1}$, has nothing to regress: $s$ is independent of $u$, so the head returns the marginal predictor.
A second direction can carry $s$, which is why rank $2$ closes the gap in Table~\ref{tab:mse}.
The forecast program never faces this choice.
Its loss has only the $s$ residual, so the normal equation puts the weight on the first coordinate.

\section{A series that separates the programs}
\label{sec:exp}

Two independent AR(1) processes are driven by standard normal innovations,
\[
s_{t+1}=0.8\,s_t+\varepsilon^s_{t+1},\qquad u_{t+1}=0.8\,u_t+\varepsilon^u_{t+1},
\]
with $s_0=u_0=0$ and length $T=20000$, seed $0$.
The observation repeats the nuisance three times,
\[
y_t=(s_t,\ u_t,\ u_t,\ u_t)\in\mathbb{R}^4.
\]
The reported target is $s_{t+1}$.
Because $u$ occupies three coordinates, a squared error on $y_{t+1}$ weights $u$ three times as heavily as $s$.
A rank-$1$ fit of the whole vector therefore prefers $u$.
A fit of $s$ alone prefers the first coordinate.

Both programs are solved by the normal equations on this sample, not by an early-stopped optimizer.
The forecast coefficient is least squares of $s_{t+1}$ on $y_t$.
The next-latent coefficient is the rank-$k$ truncation described in the appendix, followed by least squares of $s_{t+1}$ on the retained latent.
Isotropy is applied only as the rescaling in Theorem~\ref{thm:gauge}.
Code and the JSON of every number below are in \texttt{experiments/run\_lfo.py}.

\begin{table}[H]
\caption{One-step MSE of $s_{t+1}$ on the repeated-$u$ series.
The innovation row uses the true coefficient $0.8$.
The marginal row predicts the sample mean of $s$.
Mass on $s$ is the squared share of the encoder weight on the first coordinate.}
\label{tab:mse}
\centering
\small
\begin{tabular}{llrr}
\toprule
Program & rank $k$ & MSE of $s$ & mass on $s$ \\
\midrule
Forecast & $\ge 1$ & 0.9921 & 1.0000 \\
Next-latent & 1 & 2.7938 & 0.0002 \\
Next-latent & 2 & 0.9921 & \textemdash \\
Innovation of $0.8\,s_t$ & \textemdash & 0.9922 & \textemdash \\
Marginal of $s$ & \textemdash & 2.7940 & \textemdash \\
\bottomrule
\end{tabular}
\end{table}

Table~\ref{tab:mse} is Theorem~\ref{thm:ols} and Theorem~\ref{thm:rrr} on one sample.
The forecast program matches the innovation variance.
Its weight on the first coordinate is $1$ up to $10^{-8}$, and the three copies of $u$ receive the rest.
The rank-$1$ next-latent program does the opposite: mass $0.9998$ on the $u$-block, and the head for $s$ cannot beat the marginal variance $2.7940$ (achieved MSE $2.7938$).
Rank $2$ gives that program a second direction.
The MSE falls to $0.9921$ and agrees with the forecast program.
The extra rank was spent putting $s$ back into the latent, which the forecast loss had selected immediately.

Whitening the forecast latent and refitting the head leaves the MSE at $0.992115558995485$ on both sides of the rescaling.
That is Theorem~\ref{thm:gauge} in floating point.
An isotropy penalty is the right tool when a representation must not collapse \citep{balestriero20258544,chemeris2026lenepa}.
It is not a second forecast loss.

\begin{figure}[H]
\centering
\includegraphics[width=\textwidth]{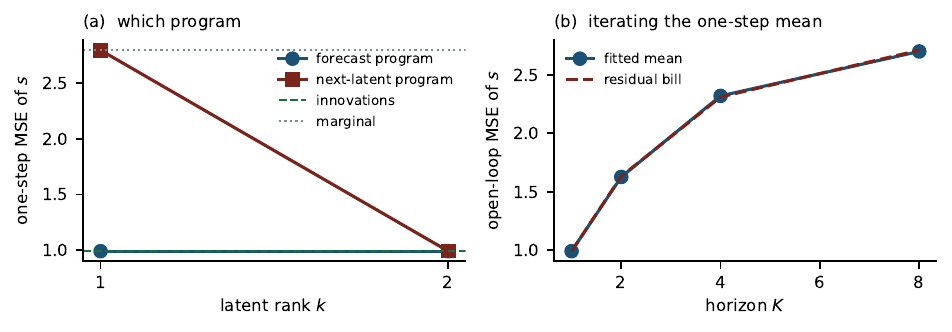}
\caption{Seed $0$, $T=20000$.
(a) One-step MSE of $s$ against latent rank.
The forecast program is already optimal at rank $1$.
The next-latent program matches it only after a second direction is available.
(b) Open-loop MSE of iterating the scalar least-squares coefficient $0.8031$ on $s$, against the residual bill \eqref{eq:bill} at that coefficient.}
\label{fig:panels}
\end{figure}

The one-step coefficient fit to the scalar series $s$ alone is $0.8031$, against the simulation value $0.8$.
Iterating it without feeding back new observations produces the open-loop errors in Table~\ref{tab:roll}.
The bill \eqref{eq:bill} uses this coefficient and the oracle innovation variance $0.9922$.
The gap at horizon $8$ is $0.011$, from the transient that starts at $s_0=0$ and from using the oracle innovation rather than the residual of $0.8031$.
The shape is the design point: a solved one-step program still has a horizon curve, and the curve is fixed by $(a,\sigma^2)$.

\begin{table}[H]
\caption{Open-loop MSE of $s$ under the scalar coefficient.
Fitted uses $a=0.8031$.
Oracle uses $a=0.8$.}
\label{tab:roll}
\centering
\small
\begin{tabular}{lrrrr}
\toprule
 & $K=1$ & $K=2$ & $K=4$ & $K=8$ \\
\midrule
Fitted iteration & 0.9921 & 1.6268 & 2.3202 & 2.7000 \\
Oracle $0.8$ & 0.9922 & 1.6269 & 2.3201 & 2.7004 \\
Bill at $a=0.8031$ & 0.9922 & 1.6320 & 2.3108 & 2.7106 \\
\bottomrule
\end{tabular}
\end{table}

\section{What to optimize}

The experiment is one linear series, not a leaderboard.
It is large enough that the normal equations are not a small-sample accident, and simple enough that the weights can be read.
Three design choices follow from the programs.

Score the coordinate that the paper will report.
If the loss averages all coordinates of $y_{t+1}$, repeated or high-variance nuisance channels own the leading singular vector.
On this series that vector is the $u$-block, and a frozen head cannot recover $s$.
Next-latent prediction remains the right training loss for a representation that must predict every channel \citep{chemeris2026lenepa,xu2025nextembeddingpredictionmakesstrong}.
It is a mismatched loss for a single reported series.

Refit the head after any marginal penalty.
Theorem~\ref{thm:gauge} says the forecast coefficient is constant on invertible reparameterizations of $z$.
SIGReg and the covariance penalty of VICReg move $z$ along that orbit when the original fit was only a scale away from isotropy \citep{balestriero20258544,bardes20224906}.
The experiment's whitened and unwhitened MSE agree to machine precision.
A penalty that is not followed by a refit of $b$ can look like a change in forecast error when it is only a change in units.

Report the horizon curve of the fitted one-step map.
Equation \eqref{eq:bill} is closed form for an AR(1).
Table~\ref{tab:roll} is that formula next to the iterated coefficient.
Linear baselines that are hard to beat on long horizons \citep{zeng20223504} are consistent with this: once the state is the window and the conditional mean is linear, a deeper next-token model does not change $\beta$.
Transformers trained on long forecast windows \citep{zhou20217436,nie20234730,wu20232186,wu20223008,liu20246625} are a different program when their training loss really is the multi-horizon error.
A one-step latent loss plus a probe is not a substitute for that program.
Time-series foundation models that are scored on those windows \citep{ansari20247815,yue20220466,dong20230861,ekambaram20239364} should be read by which program their loss implements.
Interactive video world models that distill a one-step generator and then train on their own rollouts \citep{robbyant2026lingbotworld,gao2026lingbotinfinity,zhang2026echowm} are facing the same gap at the level of pixels.
This paper does not scale that distillation.
It identifies the least-squares problem a forecaster should write down before the architecture is chosen.

\section{Related work}

\subsection{Representation losses}

\citet{oord2018cpc} score future latents with a density ratio rather than a squared residual.
\citet{xu2025nextembeddingpredictionmakesstrong} predict the next embedding of an image and stop the gradient so the target cannot collapse.
\citet{chemeris2026lenepa} carry that regression to time series with a causal transformer \citep{vaswani2017transformer}, no augmentation, and the isotropy penalty of \citet{balestriero20258544} in place of the stop-gradient.
\citet{bardes20224906} use a covariance penalty for the same collapse problem.
The number these methods publish is a property of the representation.
A forecast report averages a future observation.
Definition~\ref{def:f} is that report written as a program.
Definition~\ref{def:n} is the representation step, followed by a frozen head.

\subsection{Latent world models}

A controller that imagines needs a transition it can iterate.
\citet{ha20180122} learn a latent dynamics model and roll it inside the policy.
\citet{hafner20194551} plan in a latent model trained from pixels.
\citet{hafner20201603} train the policy on trajectories drawn from the transition, so the learned object is a kernel.
\citet{hafner20222193} keep that kernel discrete for Atari, and \citet{hafner20244104} keep the imagined trajectory as the training domain across many control suites.
\citet{wu20224176} run the same imagination loop on a physical robot.
\citet{schrittwieser20208265} plan with a learned model whose outputs are values, policies, and rewards.
\citet{hansen20224955} and \citet{hansen20246828} learn a latent dynamics model for model-predictive control and then scale it.
\citet{micheli20230588} roll a transformer world model out for sample-efficient control.
\citet{nagabandi20191652} fit deep dynamics for manipulation.
On the policy side, \citet{haarnoja20181290} and \citet{schulman20176347} assume a trajectory already exists to optimize against.
Table~\ref{tab:roll} is the linear version of the quantity these systems pay: the error of iterating a one-step map, which is larger than the one-step training loss.

\subsection{Video and interactive world models}

Pixel generators now play the role of the transition.
\citet{alonso20242399} generate the next frame with diffusion and use it as a world model, and \citet{valevski20254837} push a diffusion world model toward interactive rates.
\citet{bruce20245391} learn an action-conditioned interactive environment.
\citet{hu20237080} train a generative world model for driving.
\citet{janner20229991} plan by diffusing a whole trajectory rather than a single next state.
\citet{chen20241392} mix next-token prediction with diffusion along time.
\citet{robbyant2026lingbotworld} distill a video model into a causal interactive simulator with a minute-scale horizon.
\citet{gao2026lingbotinfinity} train that causal generator for an unbounded horizon and distill a real-time model.
\citet{zhang2026echowm} answer a continuous camera trajectory with video, environmental sound, and speech, and train the long generation on the model's own previous frames.
Each of these systems has a one-step denoiser or token head, and a rollout that consumes the model's own outputs.
The least-squares split in this paper is the same split at the level of a linear series: the loss that identifies the one-step mean is not the loss of the iterated sample path.

\subsection{Forecasting}

\citet{zhou20217436}, \citet{nie20234730}, \citet{wu20232186}, \citet{wu20223008}, and \citet{liu20246625} train and score transformers on long forecast windows.
\citet{zeng20223504} find that a linear map is often enough, which is what Theorem~\ref{thm:ols} says once the window is a state and the conditional mean is linear.
\citet{dong20230861}, \citet{yue20220466}, and \citet{ekambaram20239364} learn a masked or mixed representation of a series and are still scored as forecasters.
\citet{ansari20247815} train one forecaster across datasets.
Reading these papers by Definition~\ref{def:f} or Definition~\ref{def:n} separates a model that is trained on the reported horizon from a model that is trained on a latent proxy and probed afterwards.

\input{survey}

\section{Conclusion}

Forecasting with a next-latent bottleneck is the least-squares program in Definition~\ref{def:f}.
For a scalar target the program is ordinary least squares, extra latent rank does not help, and isotropy rescales the latent without changing the forecast.
Regressing the whole next vector and only then attaching a head is Definition~\ref{def:n}.
On the repeated-$u$ series the rank-$1$ version of that program forecasts $s$ at the marginal variance $2.794$, while the forecast program matches the innovation variance $0.992$.
The one-step coefficient still has an open-loop bill, from $0.992$ at one step to $2.700$ at eight steps.

\bibliography{refs}
\bibliographystyle{plainnat}

\appendix
\section{Finite-sample reduced rank}
\label{app:rrr}

Let $X\in\mathbb{R}^{n\times d}$ be the centered predictors $y_t$ and $Y\in\mathbb{R}^{n\times d}$ the centered responses $y_{t+1}$.
The unrestricted minimizer of $\|Y-XB\|_F^2$ is any solution of $X^\top X B=X^\top Y$.
Write $B_\star$ for the least-squares solution returned by a stable solver.
The Gram matrix of fitted responses $Y^\top X B_\star$ is symmetric positive semidefinite up to numerical skew; we replace it by its symmetric part and compute the eigendecomposition.
If $V_k$ holds the $k$ leading eigenvectors, the rank-$k$ coefficient used in the experiment is $B_k=B_\star V_k V_k^\top$.
The latent fed to the scalar head is $Z=XB_k V_k$.
The head is least squares of the centered target $s_{t+1}$ on $Z$.
For $k=1$ this latent is one column.
Its encoder weight, for the mass reported in Table~\ref{tab:mse}, is the leading left singular vector of $B_k$, and the mass on $s$ is the squared first entry of that unit vector.
For the forecast program the encoder weight is the ordinary-least-squares coefficient of $s_{t+1}$ on $y_t$, normalized.
No entry in the tables was typed before this computation wrote \texttt{experiments/results.json}.

\section{Residual bill}

Assume $s_{t+1}=a s_t+\varepsilon_{t+1}$ with $\varepsilon_{t+1}$ independent of the past, mean zero, and variance $\sigma^2$.
Unrolling gives
\[
s_{t+K}=a^K s_t+\sum_{j=0}^{K-1} a^j\varepsilon_{t+K-j}.
\]
The sum has variance $\sigma^2(1-a^{2K})/(1-a^2)$ when $a^2\neq 1$, which is \eqref{eq:bill}.
The experiment's bill column inserts the oracle one-step MSE as $\sigma^2$ and the scalar least-squares coefficient as $a$.
The fitted-iteration column does not use the formula: it multiplies the realized prefix $s_t$ by $a$ exactly $K$ times and averages the squared error against $s_{t+K}$.

\input{appendix_extra}

\end{document}

%% file: survey.tex
\section{The horizon is a third program}
\label{sec:horizon}

Definitions~\ref{def:f} and~\ref{def:n} are one-step.
A forecast that is scored at horizon $K$ is a third program. On the scalar series it reads
\begin{equation}
\min_{a}\ \Ex\bigl(s_{t+K}-a^{K} s_t\bigr)^2.
\label{eq:hk}
\end{equation}
The decision variable is still a single coefficient, applied $K$ times.
For a stationary AR(1) with coefficient $a_\star$, the population minimizer of \eqref{eq:hk} is $a_\star$ at every $K$, because $\Ex[s_{t+K}s_t]=a_\star^{K}\Ex[s_t^2]$.
Taking the positive $K$-th root of the regression of $s_{t+K}$ on $s_t$ recovers $a_\star$.
Teacher forcing replaces $s_{t+K-1}$ by the true state at every step, so each term in a teacher-forced sum has the same population minimizer as the one-step program.
The open-loop risk does not.
It is \eqref{eq:bill}, which grows with $K$ after $a$ is known.

On the sample used in Section~\ref{sec:exp}, the regression of $s_{t+K}$ on $s_t$ and the implied coefficient are:

\begin{center}
\small
\begin{tabular}{lrr}
\toprule
$K$ & $\widehat{\Ex}[s_{t+K}s_t]/\widehat{\Ex}[s_t^2]$ & implied $a$ \\
\midrule
1 & 0.8031 & 0.8031 \\
2 & 0.6464 & 0.8040 \\
4 & 0.4121 & 0.8012 \\
8 & 0.1848 & 0.8097 \\
\bottomrule
\end{tabular}
\end{center}

The implied coefficient stays inside $[0.801,0.810]$ while the open-loop error in Table~\ref{tab:roll} moves from $0.992$ to $2.700$.
A training run that matches $a$ at several horizons has not made the horizon-$8$ error small.
It has checked that the same one-step coefficient is visible from farther away.
\citet{cover2006elements} identify the one-step conditional mean as the $L^2$ projection.
Equation \eqref{eq:bill} is what remains after that projection is removed and the innovations are summed.
Publishing only the projection error hides the sum.

The same split appears whenever a policy or a video sampler consumes its own outputs.
\citet{schulman20182438} estimate advantages along a trajectory that has already been rolled out.
If that trajectory comes from a learned mean map with the innovations omitted, the advantage is computed on the conditional-mean path, whose squared distance from the series is the bill rather than the one-step residual.
Putting the innovations back is a change of program, not a change of network width.

\section{Sorting models by the program}
\label{sec:sort}

The citations below are grouped by which of the three programs the training loss implements.
The grouping is about the loss, not about which benchmark a paper wins.

\subsection{Losses that shape a representation}

A representation loss builds a vector whose geometry is the object of interest.
\citet{doersch20165192} predict the relative position of two patches.
\citet{pathak2016context} inpaint a missing region.
\citet{gidaris2018rotation} predict a rotation.
Each target is a function of the same image.
The score is not a future observation of a series.

Contrastive losses score pairs.
\citet{hjelm2018dim} and \citet{hnaff20209272} maximize a mutual-information bound.
\citet{he2020moco} and \citet{chen2020simclr} estimate a density ratio against negatives in a batch.
\citet{caron20219882} contrast cluster assignments.
\citet{grill2020byol} drop the negatives and stop the gradient into a moving-average target.
\citet{assran20227141} mask one view of a Siamese network.
\citet{tian20216810} study the training dynamics once the negatives are gone.
The collapse that then appears is a property of the marginal of the embedding.
\citet{jing20229348} describe dimensional collapse.
\citet{wang20220242} separate alignment of positive pairs from uniformity of the marginal on the sphere.
\citet{haochen20224156} give a spectral account of a contrastive objective.
An isotropy penalty is a direct way to write that marginal constraint.
It is the constraint in Definition~\ref{def:f}, and Theorem~\ref{thm:gauge} says it does not change a scalar forecast once the head is refit.

Masked prediction regresses a hidden piece of the input.
\citet{he2022mae} regress masked patches.
\citet{xie20229886} regress raw pixels under a mask.
\citet{assran2023jepa} predict in representation space from a visible context, and \citet{bardes2024vjepa} do the same for video.
The target is a point.
A forecast that ships a number is Definition~\ref{def:f}.
A forecast that is trained to reconstruct every channel and is only later probed on one coordinate is Definition~\ref{def:n}.

Predictive coding makes the same distinction in another vocabulary.
\citet{rao1999predictive} cancel the part of the sensory input that the past already explains.
\citet{bialek2001predictive} tie the bits of predictive information to the complexity of the extractor.
\citet{golkar20235752} write a stack of constrained predictive-coding layers.
The cancelled residual is $R_1$.
The bits that remain in a $K$-step future are the bill, and they are not removed by fitting $R_1$ more carefully.

\subsection{One-step maps that are sampled by iteration}

Diffusion writes the split into the training loop itself.
\citet{sohldickstein20153585} define a destruction process.
\citet{ho2020ddpm} train a denoiser for one noise level at a time.
\citet{song2021ddim} and \citet{song20213456} show that the object a user samples is a path built by iterating that denoiser, not the denoiser's one-step output.
\citet{nichol20219672} and \citet{dhariwal2021diffusion} change the sampler and the architecture around an unchanged one-step loss.
\citet{karras20220364} separate which of those choices move sample quality.
\citet{peebles2023dit} put a transformer in the denoiser.
\citet{rombach20220752} denoise in a latent.
\citet{ho20223458} and \citet{blattmann20238818} carry the same one-step denoiser to video, where the iterated path is a clip.
\citet{lipman20232747} and \citet{liu20223003} regress a velocity field instead of a noise residual.
The sample is still an integral of that field.
\citet{salimans20220512} and \citet{yin20248828} distill the multi-step sampler toward fewer evaluations.
Distillation is useful because the training loss and the sample path are different programs.
A forecaster who reports $R_1$ and ships an iterated map is in the same position, with \eqref{eq:bill} in place of the sampler.

Autoregressive image models make the split along tokens rather than noise levels.
\citet{chen2020igpt} predict pixels in raster order.
\citet{van2017vqvae} quantize the image first.
\citet{tian2024var} predict the next scale.
\citet{li2024mar} drop the discrete codes and still generate by iterating a one-step head.
One training step scores one prediction.
The picture is a loop.
Definition~\ref{def:n} is that loop's one-step loss when the ``token'' is a vector $y_{t+1}$ and the rank of the predictor is capped.

\subsection{Losses that score a forecast window}

Long-horizon transformers are closer to \eqref{eq:hk} when the loss really is computed at the reported horizons.
\citet{zhou20222740} decompose a series in the frequency domain before that loss.
\citet{zerveas20202803} pretrain a transformer on multivariate series.
\citet{eldele20214112} contrast a window with a later window.
\citet{rasul20248278} and \citet{goswami20243885} train one probabilistic or one foundation forecaster across datasets.
A contrast between two windows is still a representation loss if the downstream number is a point forecast fit afterwards.
It is a forecast loss if the training objective is the error of that point forecast.
The normal equations in Section~\ref{sec:opt} are the test.
If the gradient with respect to the encoder flows through the reported coordinate, the run is the forecast program.
If the gradient flows through a reconstruction of every channel and the reported coordinate is attached later, the run is the next-latent program, and Table~\ref{tab:mse} is the failure mode when one channel is repeated.

\subsection{What the sorting does not decide}

Nothing in this sorting says that a video world model should be replaced by a linear map.
\citet{hafner20244104} and \citet{zhang2026echowm} are large because the observation is an image and the action space is rich.
The sorting says which scalar they should log.
The one-step denoising or token loss is $R_1$.
The error of a rollout that does not see new observations is $R_K$.
On the AR(1) those two numbers are $0.992$ and $2.700$ at horizons $1$ and $8$, with the same coefficient.
A paper that reports only the first has not measured the second.

\section{When the target is the whole vector}
\label{sec:vector}

Theorem~\ref{thm:ols} uses a scalar target, so rank $1$ is enough.
If the reported object is the vector $y_{t+1}$ itself, the forecast program and the next-latent program are the same program, and the rank starts to matter.
The loss is the average squared coordinate error
\[
\frac{1}{d}\Ex\|y_{t+1}-B y_t\|^2,\qquad \mathrm{rank}(B)\le k,
\]
with $d=4$ in the experiment.
The conditional mean $B_\star y_t=(a s_t, a u_t, a u_t, a u_t)$ has rank $2$.
Every coordinate's innovation has variance $\sigma^2=1$, so a predictor that attains $B_\star$ has per-coordinate risk $1$.
A rank-$1$ predictor that keeps only the $u$-block, as Appendix~\ref{app:pop} requires, still predicts the three $u$ coordinates at risk $1$, and predicts $s_{t+1}$ by a constant.
The per-coordinate risk on $s$ rises from $1$ to the marginal variance $v=\sigma^2/(1-a^2)$.
Averaged over four coordinates the excess is $(v-1)/4$.
At $a=0.8$ and $\sigma^2=1$, $v=1/0.36$ and $(v-1)/4=0.444\ldots$.
Adding that excess to the irreducible per-coordinate risk $1$ predicts a rank-$1$ reconstruction near $1.444$, with a small transient because the sample starts at zero.

The experiment's reconstruction column, which was not the quantity in Table~\ref{tab:mse}, matches this arithmetic.
Rank $1$ reconstructs $y_{t+1}$ at per-coordinate MSE $1.4599$.
Rank $2$ reconstructs it at $1.0095$, the innovation floor up to the same transient.
The gap $0.450$ is the dropped $s$ component, spread over four coordinates.
Raising the rank did not invent a new signal.
It paid for the second singular direction that the scalar forecast program had selected at rank $1$, because that program never averaged the $u$ copies into the loss.

This is the design rule for a multivariate report.
If every coordinate will be scored, the latent rank has to cover the predictable subspace, and Definition~\ref{def:n} is the right program.
If one coordinate will be scored, averaging the others into the training loss spends rank on them, and Definition~\ref{def:f} is the right program.
The two rules agree only when the reported coordinates span the same predictable subspace as the vector that is being reconstructed.
Three copies of $u$ and one copy of $s$ are not that case.
A video model that reconstructs every pixel and is judged on a downstream scalar, such as a robot joint or a camera pose error, is the same case at a larger $d$.
The pixels are the repeated coordinates.
The scalar is $s$.
A bottleneck trained as Definition~\ref{def:n} fits the pixels first.

\section{What this calculation does not cover}

The normal equations are for linear $W$ and a linear head.
A transformer encoder can mix $s$ and $u$ before the head, and a nonlinear head can bend the forecast.
The sorting in Section~\ref{sec:sort} still applies to the last linear readout.
If that readout is trained on $s$ jointly with the encoder, the gradient is the forecast program, whatever features the encoder built.
If the encoder is trained to reconstruct $y$ and the readout is attached later, the encoder solved the vector program, and the readout can only use the directions that survived.
Nonlinearity changes the features.
It does not change which scalar the gradient saw.

The series starts at zero, so the sample variance $2.7940$ is not the stationary value $1/(1-0.8^2)=2.777\ldots$.
The bill at horizon $8$ differs from the iterated coefficient by $0.011$ for the same reason, together with the use of the oracle innovation $0.9922$ rather than the residual of $a=0.8031$.
Neither gap changes the ordering in Table~\ref{tab:mse} or the growth in Table~\ref{tab:roll}.
A longer burn-in would shrink them.
It would not move the rank-$1$ next-latent mass off the $u$-block, because that mass is produced by the three identical rows of the cross-covariance, which are an exact property of the observation map $y=(s,u,u,u)$.

The calculation does not say how wide a latent must be for video.
It says how to count.
Each predictable direction that the reported metric cares about costs one rank, and each copy of a direction that the loss also reconstructs competes for that rank in proportion to its energy.
On this series the energies are $a^2 v$ and $3 a^2 v$.
In a video the energies are not known in closed form, but the programs that estimate them are the same two programs.
Logging both the one-step fit and the open-loop curve, for the coordinate that will actually be reported, is the measurement the tables already make for $s$.

%% file: appendix_extra.tex
\section{Population geometry of the repeated coordinate}
\label{app:pop}

Let $s$ and $u$ be independent stationary AR(1) processes with the same coefficient $a\in(-1,1)$ and the same innovation variance $\sigma^2$.
The stationary variance is $v=\sigma^2/(1-a^2)$, and $\Ex[s_{t+1}s_t]=\Ex[u_{t+1}u_t]=av$.
All cross moments between $s$ and $u$ are zero.
The observation and its successor are
\[
y_t=\begin{pmatrix}s_t\\ u_t\\ u_t\\ u_t\end{pmatrix},
\qquad
y_{t+1}=\begin{pmatrix}s_{t+1}\\ u_{t+1}\\ u_{t+1}\\ u_{t+1}\end{pmatrix}.
\]
The conditional mean is $\Ex[y_{t+1}\mid y_t]=(a s_t, a u_t, a u_t, a u_t)$.
Any predictor with values in $\mathbb{R}^4$ is scored by the expected squared Euclidean norm.
Write $e_s=(1,0,0,0)$ and $e_u=(0,1,1,1)/\sqrt{3}$, two orthonormal vectors.
The conditional mean lies in $\mathrm{span}\{e_s,e_u\}$:
\[
\Ex[y_{t+1}\mid y_t]
= (a s_t)\, e_s + (a u_t\sqrt{3})\, e_u.
\]
The energy of the $e_s$ component, averaged over the stationary law, is $a^2 v$.
The energy of the $e_u$ component is $a^2 v \cdot 3$, because $\| (u_t,u_t,u_t) \|^2 = 3 u_t^2$ and $\Ex[u_t^2]=v$.
A rank-$1$ predictor can match only one of these two orthogonal components.
The Frobenius (equivalently Euclidean) criterion therefore keeps $e_u$ and drops $e_s$.
The kept direction is a function of $u$ alone.
Since $s_{t+1}$ is independent of $u$, the best predictor of $s_{t+1}$ from that direction is the constant $\Ex[s]$.
Its risk is $v$.
With $a=0.8$ and $\sigma^2=1$, one has $v=1/0.36=2.777\ldots$, and the finite-sample marginal variance in Table~\ref{tab:mse} is $2.7940$ because the series starts at $0$ rather than in stationarity.
The rank-$1$ next-latent MSE $2.7938$ matches that marginal, and the encoder mass on $s$ is $0.0002$.
Both are the finite-sample form of this calculation.
A second retained direction can carry $e_s$.
The forecast program never compares the two energies, because its loss contains only the $s$ residual, whose normal equation is supported on the first coordinate.

The Gram matrix $\Ex[y_t y_t^\top]$ is singular.
Rows two through four are identical, so the column space is two-dimensional even before any rank constraint.
The experiment therefore solves $X^\top X B = X^\top Y$ by a least-squares solver rather than by an inverse.
The singularity is the linear-algebra form of the statement that the three copies of $u$ are one signal.

\section{One coefficient, several horizons}

Differentiate \eqref{eq:hk} in the scalar $a$, at a fixed horizon $K$, under the stationary AR(1) law.
The population stationarity condition is
\[
\Ex[s_t(s_{t+K}-a^{K} s_t)]=0,
\]
hence $a^{K}=\Ex[s_{t+K}s_t]/\Ex[s_t^2]=a_\star^{K}$, so $a=a_\star$ when we take the real $K$-th root for $a_\star>0$.
The same $a_\star$ is the stationarity condition of the one-step program.
Horizons do not identify a new coefficient.
They re-estimate the old one.
On the sample, the implied coefficients $0.8031$, $0.8040$, $0.8012$, and $0.8097$ at horizons $1,2,4,8$ are four estimates of a number whose simulation value is $0.8$.
The open-loop risks at those horizons are not four estimates of one number.
They are \eqref{eq:bill} evaluated at the fitted coefficient, and they increase.

A teacher-forced objective of the form $\sum_{j=1}^{K}\Ex(s_{t+j}-a s_{t+j-1})^2$ is a sum of one-step programs along the true path.
Each term has stationarity condition $\Ex[s_{t+j-1}(s_{t+j}-a s_{t+j-1})]=0$, which is again $a=a_\star$ under stationarity.
Teacher forcing can reduce the variance of the estimate of $a$.
It does not penalize $R_K-R_1$.
An optimizer that drives the teacher-forced loss to $\sigma^2$ has finished the one-step program.
The horizon curve is then a function of $(\hat a,\hat\sigma^2)$, not a free output of a deeper network.

\section{Checklist for a training loss}

The programs give a short test that does not depend on the architecture.

If the scalar that is averaged is the reported coordinate $s_{t+1}$, and the gradient reaches the encoder through that scalar, the run is Definition~\ref{def:f}.
For a linear head the solution is ordinary least squares, extra latent width past rank $1$ does not reduce the risk, and an isotropy constraint is removed by refitting the head.

If the scalar that is averaged is a squared error on the whole next vector, or on a latent that was trained to match that vector, and the reported coordinate is fit only after the encoder is frozen, the run is Definition~\ref{def:n}.
The retained subspace is the leading predictable subspace of the vector, with channels weighted by how many times they appear and by how large their predictable variance is.
A channel that is copied, or that simply has larger variance, can evict the reported series from a small latent.
Table~\ref{tab:mse} is this eviction with three copies and equal variances.

If the scalar that is averaged is an error at horizon $K$ under a map that has been iterated, the run is \eqref{eq:hk} or its vector analogue.
On a linear autoregression the population solution is still the one-step coefficient.
The quantity that changes with $K$ is the risk, not the argmin.
Logging the argmin at many horizons and not logging the risk confuses the two.

If a marginal penalty such as $\Ex[zz^\top]=I$ is added, Theorem~\ref{thm:gauge} applies whenever the head can undo an invertible reparameterization.
The penalty should be read as a choice of coordinates.
A change in the forecast after the penalty, without a refit of the head, is not evidence that the penalty saw the transition.

\section{A numerical check of the energies}

Appendix~\ref{app:pop} compares two energies, $a^2 v$ and $3a^2 v$.
With $a=0.8$ and $\sigma^2=1$ the stationary variance is $v=1/(1-a^2)=25/9$.
The predictable energy of $s$ is $a^2 v=16/9$.
The predictable energy of the three-coordinate $u$-block is $3a^2 v=16/3$.
Their ratio is exactly $3$, the number of copies, because the two signals were given the same coefficient and the same innovation.
The rank-$1$ vector fit therefore prefers $u$ by that factor, not by a property of the optimizer.
Changing the simulation coefficient from $0.8$ to another value in $(0,1)$ rescales both energies by the same $a^2/(1-a^2)$ and leaves the ratio equal to $3$.
The preference reverses only if the reported signal is given more than three times the predictable energy of one $u$ coordinate, for example by raising its innovation variance above $3$ while the copies of $u$ stay at unit innovation.
The experiment does not search over that ratio.
It fixes the ratio at $3$ and checks that the normal equation spends the single direction on the larger energy.
The mass $0.9998$ on the $u$-block is that check.
The mass $1$ of the forecast program on the first coordinate is the check that removing the other three residuals from the loss removes the factor of $3$.

The one-step increment of the bill has the same closed form.
From \eqref{eq:bill},
\[
R_{K}-R_{K-1}=\sigma^2 a^{2(K-1)}
\]
for $K\ge 1$, with $R_0=0$.
At the fitted coefficient $a=0.8031$ and $\sigma^2=0.9922$, the first five increments are $0.9922$, $0.6399$, $0.4127$, $0.2661$, and $0.1716$.
They shrink because $|a|<1$, and they stay positive.
A plot of training loss against epoch cannot show them.
Only an open-loop evaluation at those horizons can, which is what Table~\ref{tab:roll} records.